\newtheorem{theorem}{Theorem}
\definecolor{eclipseStrings}{RGB}{42,0.0,255}
\definecolor{eclipseKeywords}{RGB}{127,0,85}
\colorlet{numb}{magenta!60!black}
\lstdefinelanguage{json}{
    basicstyle=\normalfont\ttfamily,
    commentstyle=\color{eclipseStrings}, 
    stringstyle=\color{eclipseKeywords}, 
    numbers=left,
    numberstyle=\scriptsize,
    stepnumber=1,
    numbersep=8pt,
    showstringspaces=false,
    breaklines=true,
    frame=lines,
    backgroundcolor=\color{white}, 
    string=[s]{"}{"},
    comment=[l]{:\ "},
    morecomment=[l]{:"},
    literate=
        *{0}{{{\color{numb}0}}}{1}
         {1}{{{\color{numb}1}}}{1}
         {2}{{{\color{numb}2}}}{1}
         {3}{{{\color{numb}3}}}{1}
         {4}{{{\color{numb}4}}}{1}
         {5}{{{\color{numb}5}}}{1}
         {6}{{{\color{numb}6}}}{1}
         {7}{{{\color{numb}7}}}{1}
         {8}{{{\color{numb}8}}}{1}
         {9}{{{\color{numb}9}}}{1}
}
\title{Everything Can Be Described in Words: A Simple Unified Multi-Modal Framework with Semantic and Temporal Alignment}
\author{Xiaowei Bi \\
  Northwestern University \\
  \texttt{xiaoweibi2021@u.northwestern.edu} \\\And
  Zheyuan Xu \\
  IEEE Member \\
  \texttt{cx1014@uw.edu} \\}
\begin{document}
\maketitle
\begin{abstract}
While multi-modal learning has advanced significantly, current approaches often create inconsistencies in representation and reasoning of different modalities. We propose UMaT, a theoretically-grounded framework that unifies visual and auditory inputs as structured text for large language models, addressing semantic alignment, temporal synchronization, and efficient sparse information retrieval. It significantly improves state-of-the-art Long Video Question Answering accuracy (up to 13.7\%, and 16.9\% on long videos) via redundancy minimization and structured textual representation for unified multi-modal reasoning.
\end{abstract}

\section{Introduction}

Integrating heterogeneous modalities (vision, audio, text) remains a key AI challenge, despite progress in large-scale multi-modal models \cite{Guo2019MultimodalRL,Wang2022InternVideoGV,Ye2023mPLUGOwlME}. However, separate encoding pathways often lead to representational issues, scalability limits, and poor transfer \cite{Fu2021VIOLETE,Yang2022}.

Long Video Question Answering (LVQA) highlights these challenges due to scattered information, modality alignment needs, and computational constraints. Existing LVQA methods truncate content or use resource-intensive architectures prone to semantic drift and redundancy, largely due to a lack of unified cross-modal representation.

We introduce UMaT (Unified Multi-modal as Text) with the following contributions:
\begin{itemize}
    \item Formulation of multi-modal alignment via textual representation with optimal information preservation and minimized redundancy.
    \item Novel temporal segmentation and contextual alignment for semantic coherence and efficient retrieval-augmented generation.
    \item Information-theoretic content deduplication adaptively filtering redundancy while preserving critical signals with provable retention.
    \item Significant improvement in state-of-the-art LVQA performance, particularly on long, information-sparse videos.
\end{itemize}

UMaT offers a modular and interpretable solution adaptable to various scales and domains beyond LVQA, unlike black-box methods.

\section{Related Work}


Retrieval-Augmented Generation (RAG) is increasingly used to enhance multi-modal large language models, as demonstrated by Wang et al.'s \cite{wang2023filling} proactive image questioning for visual QA, Lin and Byrne's \cite{lin-byrne-2022-retrieval} integrated differentiable dense retrieval, Lin et al.'s \cite{lin-etal-2023-fvqa} work on adversarial visual QA, and PreFLMR's \cite{lin-etal-2024-preflmr} state-of-the-art knowledge-based visual QA. Video-RAG \cite{luo2024videoragvisuallyalignedretrievalaugmentedlong} applies this to video using visually-aligned text to improve visual-language models; however, these retrieval methods often treat modalities as separate streams without a unified representation.


Creating unified representations across modalities with differing dimensionality and semantics is challenging. Cross-Modal Generalization \cite{xia2024achieving} and training-free methods \cite{Huang2024UnlockingTP} address this, while Zhu and Li \cite{Zhu2023IterativeUA} use contrastive learning for image-text retrieval, and Video-XL \cite{Shu2024VideoXLEV} condenses video using Visual Summarization Tokens. However, these methods often involve complex architectures and lack theoretical grounding for optimal cross-modal information preservation.



Efficient temporal data processing benefits from segmentation and deduplication, as shown by Tirumala et al. \cite{TirumalaD4} using pre-trained embeddings for LLM training and Liu et al.'s \cite{liu2023one} dynamic token masking for variable-length video robustness. For finer understanding, Momentor \cite{Momenter2024} enables segment-level reasoning, SlowFast-LLaVA \cite{Xu2024SlowFastLLaVAAS} uses a two-stream design for spatial and temporal context, a Hierarchical Keyframe Selector \cite{Park2024TooMF} reduces keyframes, and an iterative approach \cite{VideoAgent2024} gathers relevant information; however, these temporal processing methods lack a unified framework for optimal information and efficiency balance.


Long-form video understanding faces challenges due to long temporal dependencies and sparse information \cite{temp_1}, \cite{temp_2}, \cite{temp_3}. LongVLM \cite{LongVLM2024} addresses this by segmenting videos and using hierarchical token merging for local features. LLoVi \cite{llovi} generates textual descriptions of short video clips via captioning. Other methods focus on specific tasks like intent reasoning (IntentQA \cite{IntentQA}), prompt robustness (PLLaVA \cite{xu2024pllava}), question-aware memory (MovieChat+ \cite{moviechatplus}), and temporal localization (TimeChat \cite{timechat2023}). However, these approaches often lack interpretability and a unified cross-modal representation.

\section{Problem Formulation}

\subsection{Multi-Modal Alignment Problem}
Given a video $V$ with visual content $V_v$ and audio content $V_a$, we aim to create a unified representation that preserves the semantics of both modalities (visual and audio) $\mathcal{M} = \{v, a\}$. For each modality $m \in \mathcal{M}$, we have a sequence of raw inputs $X^m = \{x^m_1, x^m_2, ..., x^m_{T_m}\}$ where $T_m$ is the length of the sequence for modality $m$. For visual content, $x^v_i$ represents a frame or short clip, while for audio, $x^a_i$ represents a short audio segment.

The multi-modal alignment problem can be formulated as finding mapping functions $f_m: X^m \rightarrow \mathcal{T}$ for each modality $m$ that transform inputs into a shared textual space $\mathcal{T}$, such that:

\begin{equation}
\begin{aligned}
\min_{f_v, f_a} \quad & \mathcal{L}(f_v, f_a) \\
\text{s.t.} \quad & I(X^m; f_m(X^m)) \geq (1-\epsilon)H(X^m),\quad \\
\forall m \in \mathcal{M}
\end{aligned}
\end{equation}

where $I(\cdot;\cdot)$ represents mutual information, $H(\cdot)$ represents entropy, $\epsilon$ is a small constant determining allowable information loss, and $\mathcal{L}$ is a loss function measuring semantic alignment between modalities. The formulation ensures that the textual representations preserve as much information as possible from the original modalities while staying within the textual space.

\subsection{Temporal Alignment and Segmentation}


The temporal alignment problem involves creating aligned segments $S = \{S_1, S_2, ..., S_K\}$ where each segment $S_k$ contains temporally aligned textual representations from all modalities:

\begin{equation}
S_k = \{(f_m(x^m_i), t^m_i) \mid t^m_i \in [t_k^{\text{start}}, t_k^{\text{end}}], m \in \mathcal{M}\}
\end{equation}

where $[t_k^{\text{start}}, t_k^{\text{end}}]$ defines the time interval for segment $k$.

\subsection{Information-Theoretic Content Selection}


Processing long videos with numerous segments can exceed language model capacity, necessitating efficient relevant segment retrieval for a given query $q$. We formulate this as an information-theoretic optimization problem to find a segment subset $S_q \subset S$ that maximizes the conditional mutual information with the query response $r$ given $q$, with $\mathcal{P}(S)$ being the segment power set:

\begin{equation}
\begin{aligned}
S_q^* = \arg\max_{S_q \in \mathcal{P}(S)} \quad & I(S_q; r \mid q) \\
\text{s.t.} \quad & |S_q| \leq L
\end{aligned}
\end{equation}

where $L$ is the maximum number of segments that can be processed within computational constraints. This formulation aims to select segments with most information about the desired response given the query, while respecting computational constraints.

\section{Method}

\subsection{Unified Multi-Modal Textual Representation}

\begin{figure*}[!htbp]
    \centering
    \includegraphics[width=\linewidth]{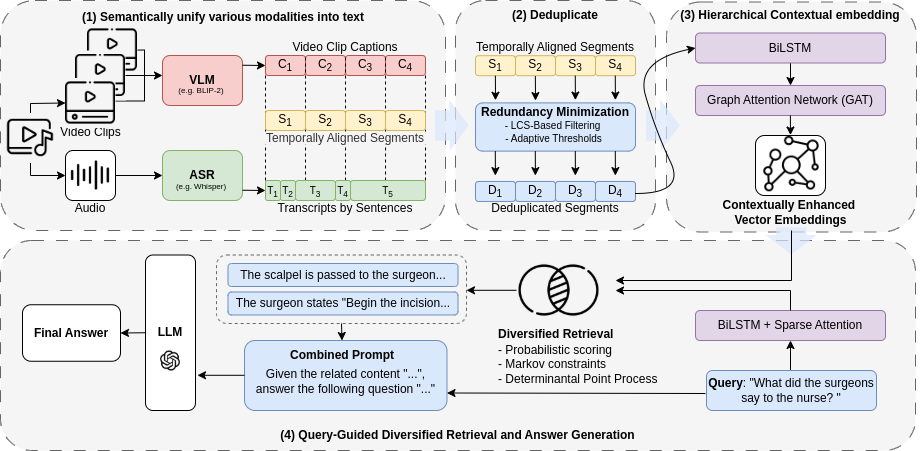}
    \caption{Flowchart showing UMaT framework: Raw video undergoes parallel processing via a VLM (visual) and ASR (audio). Resulting text is temporally aligned and fused into segments. A hierarchical contextual embedding engine creates high-dimensional vectors preserving temporal and semantic relationships. During inference, probabilistic diversified retrieval selects relevant segments for an LLM prompt.}
    \label{fig:architecture}
\end{figure*}






UMaT unifies visual and auditory inputs into text using encoders $\phi_v$ and $\phi_a$.

For a video $V$ segmented into clips $v_i \in \mathbb{R}^{T_v \times H \times W \times 3}$, textual captions $c_i$ are generated by $\phi_v(v_i)$ with a prompting strategy $\mathcal{P}$, which emphasizes objective scene content (composition, attributes, actions, spatial relations):
\begin{equation*}
c_i = \phi_v(v_i, \mathcal{P})
\end{equation*}

Corresponding audio segments $a_i$ are transcribed into $t_i = \phi_a(a_i)$, structured as $(t_i^{\text{start}}, t_i^{\text{end}}, t_i^{\text{text}})$.

\subsection{Information-Preserving Temporal Alignment}











To align visual captions $\{c_i\}$ and audio transcriptions $\{t_i\}$ temporally for coherent video representation, we segment them into fixed-length temporal segments $S_k$:
\begin{equation*}
S_k = \bigcup_{i=kT_s}^{(k+1)T_s-1} \{(c_i, t_i)\}
\end{equation*}
where $T_s$ is the segment size.

Information density $\rho(S_k)$ within each segment is defined as:
\begin{equation*}
\rho(S_k) = \frac{H(S_k)}{\sum_{i=kT_s}^{(k+1)T_s-1} |c_i| + |t_i|}
\end{equation*}
where $H(S_k)$ is the entropy and $|c_i|, |t_i|$ are lengths.

The optimal segment size $T_s^*$ maximizes the minimum information density across all segments:
\begin{equation*}
T_s^* = \arg\max_{T_s} \min_{k} \rho(S_k)
\end{equation*}
This balances segment length and information content.

\subsection{Optimal Redundancy Minimization}








Within each segment, redundancy is minimized while preserving critical information via constrained optimization.

Similarity between consecutive captions $c_i$ and $c_{i+1}$ is measured by:
\begin{equation*}
\text{Sim}(c_i, c_{i+1}) = \frac{2 \cdot |\text{LCS}(c_i, c_{i+1})|}{|c_i| + |c_{i+1}|}
\end{equation*}

In which LCS denotes longest-common subsequences. Redundant content is removed based on an adaptive threshold $\theta(c_i)$:
\begin{equation*}
c_i' = \begin{cases}
c_i - \text{LCS}(c_{i-1}, c_i) & \text{if } \text{Sim}(c_{i-1}, c_i) > \theta(c_i) \\
c_i & \text{otherwise}
\end{cases}
\end{equation*}
where $\theta(c_i)$ adapts to caption information content:
\begin{equation*}
\theta(c_i) = \alpha + \beta \cdot \frac{H(c_i)}{|c_i|}
\end{equation*}
$\alpha$ and $\beta$ are hyperparameters.

\begin{theorem}
Given captions $\{c_i\}$ and processed $\{c_i'\}$ with adaptive threshold $\theta(c_i) < 1 - \frac{H_{\text{unique}}(c_i)}{H(c_i)}$, the processed captions preserve at least $(1-\epsilon)$ of the unique information.
\end{theorem}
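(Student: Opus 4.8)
The plan is to prove the bound by combining an entropy decomposition of each caption with a two-case analysis of the deduplication rule. I would first split the entropy of every caption as $H(c_i) = H_{\text{unique}}(c_i) + H_{\text{shared}}(c_i)$, where $H_{\text{shared}}(c_i)$ measures the content of $c_i$ that is also recoverable from $c_{i-1}$ and $H_{\text{unique}}(c_i)$ is the residual novel information. The pivotal observation is that the deletion target $\text{LCS}(c_{i-1},c_i)$ is, by definition, a common subsequence of the two captions, so the symbols it removes are exactly symbols shared with $c_{i-1}$; at the symbol level the novel symbols of $c_i$ are left untouched, and these are what ultimately carry $H_{\text{unique}}(c_i)$ through the deletion.

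I would then dispatch the two branches of the rule. When $\text{Sim}(c_{i-1},c_i)\le\theta(c_i)$ we keep $c_i'=c_i$, so $H(c_i')=H(c_i)\ge H_{\text{unique}}(c_i)$ and the claim is immediate. When $\text{Sim}(c_{i-1},c_i)>\theta(c_i)$ we delete the LCS and write $H(c_i')=H(c_i)-H_{\text{LCS}}$, where $H_{\text{LCS}}$ is the entropy of the removed subsequence. The target $H(c_i')\ge(1-\epsilon)H_{\text{unique}}(c_i)$ is then algebraically equivalent to $H_{\text{LCS}}\le H_{\text{shared}}(c_i)+\epsilon\,H_{\text{unique}}(c_i)$, so everything reduces to showing that the discarded entropy exceeds the genuinely shared entropy by at most an $\epsilon$-fraction of the unique entropy.

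To control $H_{\text{LCS}}$ I would bring in the threshold hypothesis, rewritten as $\theta(c_i)<\frac{H_{\text{shared}}(c_i)}{H(c_i)}$. Under a per-symbol information-density estimate $H_{\text{LCS}}\approx\frac{|\text{LCS}(c_{i-1},c_i)|}{|c_i|}H(c_i)$ together with the identity $|\text{LCS}(c_{i-1},c_i)|=\tfrac12\,\text{Sim}(c_{i-1},c_i)\,(|c_{i-1}|+|c_i|)$, the fraction of entropy removed is governed by the similarity, which the adaptive threshold $\theta(c_i)=\alpha+\beta\,H(c_i)/|c_i|$ ties back to the caption's own density. I would absorb the gap between $\theta(c_i)$ and the shared-information fraction, plus the deviation of the LCS's per-symbol entropy from the caption average, into a single slack term and identify that slack with $\epsilon$. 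Summing the resulting per-caption inequality over all $i$ would then deliver $\sum_i H(c_i')\ge(1-\epsilon)\sum_i H_{\text{unique}}(c_i)$.

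The hard part will be the step that converts the clean symbol-level fact—deleting a common subsequence deletes only shared symbols—into a quantitative entropy statement, since entropy is not additive over symbols once intra-caption correlations are present and the LCS need not capture \emph{all} of the shared content. Making the uniform-density approximation rigorous (or replacing it by an explicit bound on how far the LCS's per-symbol entropy can exceed the caption's average) is what actually pins down $\epsilon$ as a function of $\alpha$, $\beta$, and the similarity gap; by contrast the decomposition, the case split, and the final summation are routine bookkeeping.
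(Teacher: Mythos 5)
Your plan follows the same skeleton as the paper's own proof, which is only three sentences long: define $H_{\text{unique}}(c_i) = H(c_i) - I(c_i;c_{i-1})$, assert that removing the LCS ``eliminates $I(c_i;c_{i-1})$,'' and assert that the threshold condition delivers the $(1-\epsilon)$ factor. You reproduce that skeleton (the entropy decomposition into unique plus shared, the two-case split on the similarity test, the threshold rewritten as $\theta(c_i) < H_{\text{shared}}(c_i)/H(c_i)$) and then, to your credit, try to make the middle step quantitative --- which the paper never does.

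The gap you flag at the end is real and is not closable with the tools on the table. First, $\text{LCS}(c_{i-1},c_i)$ is a combinatorial object defined on symbol sequences, while $I(c_i;c_{i-1})$ is a distributional quantity; the claim that deleting the LCS removes exactly (or at most) the shared information is a category error unless you posit a generative model under which per-symbol entropies are meaningful and the LCS symbols carry precisely the mutual information. Your uniform-density estimate $H_{\text{LCS}} \approx \frac{|\text{LCS}(c_{i-1},c_i)|}{|c_i|}H(c_i)$ is such a posit, but as you yourself note it points in the wrong direction: you need an \emph{upper} bound on $H_{\text{LCS}}$, and a common subsequence can in principle contain the most informative symbols of $c_i$. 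Second, $\epsilon$ is never defined in the theorem --- not in terms of $\alpha$, $\beta$, $\theta$, or the entropy ratios --- so ``identify the slack with $\epsilon$'' is not a proof step but a post hoc definition of $\epsilon$, which makes the statement true by construction and therefore uninformative. The paper's proof has exactly these same two holes; it simply does not acknowledge them. If you want a defensible statement, the cleanest repair is to handle the trivial branch as you do (when $\text{Sim}(c_{i-1},c_i)\le\theta(c_i)$ nothing is removed and the claim is immediate) and, in the other branch, either add the explicit hypothesis that the per-symbol entropy of the removed subsequence does not exceed the caption average, or restate the theorem with $\epsilon$ defined as the resulting slack.
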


\begin{proof}
Unique information $H_{\text{unique}}(c_i) = H(c_i) - I(c_i; c_{i-1})$. Redundancy removal eliminates $I(c_i; c_{i-1})$. The threshold ensures preserved information is at least $(1-\epsilon) H_{\text{unique}}(c_i)$.
\end{proof}

This guarantees critical information preservation during efficient content reduction.

\subsection{Retrieval-Augmented Generation with Dense Embeddings}

For efficient query-based video segment retrieval, we use a context-aware hierarchical embedding with probabilistic selection, addressing semantic alignment, contextual coherence, and information diversity.

\subsubsection{Hierarchical Contextual Embedding}

Let $\mathcal{S} = \{S_1', S_2', \ldots, S_K'\}$ be the set of all processed segments. Each segment $S_k'$ consists of a sequence of textual elements $S_k' = \{w_1^k, w_2^k, \ldots, w_{n_k}^k\}$ representing words or tokens from both visual captions and audio transcriptions. We first employ a hierarchical embedding approach that captures both local and global semantic structure:

\begin{equation}
\mathbf{h}_i^k = \text{BiLSTM}(w_i^k, \mathbf{h}_{i-1}^k), \quad i \in \{1, 2, \ldots, n_k\}
\end{equation}

\begin{equation}
\mathbf{a}_i^k = \frac{\exp(\mathbf{v}^T \tanh(\mathbf{W}\mathbf{h}_i^k + \mathbf{b}))}{\sum_{j=1}^{n_k} \exp(\mathbf{v}^T \tanh(\mathbf{W}\mathbf{h}_j^k + \mathbf{b}))}
\end{equation}

\begin{equation}
\mathbf{e}_k^{\text{local}} = \sum_{i=1}^{n_k} \mathbf{a}_i^k \mathbf{h}_i^k
\end{equation}

where $\mathbf{h}_i^k$ is the hidden state of the BiLSTM for token $i$ in segment $k$, $\mathbf{a}_i^k$ is the attention weight, and $\mathbf{e}_k^{\text{local}}$ is the local segment embedding.

To capture global contextual information and temporal relationships, we introduce a graph-based contextual enhancement:

\begin{equation}
\mathcal{G} = (\mathcal{V}, \mathcal{E}, \mathbf{A})
\end{equation}

where $\mathcal{V} = \{v_1, v_2, \ldots, v_K\}$ is the set of vertices corresponding to segments, $\mathcal{E}$ is the set of edges representing temporal proximity and semantic similarity, and $\mathbf{A} \in \mathbb{R}^{K \times K}$ is the adjacency matrix with elements:
{
\footnotesize
\begin{equation}
\begin{aligned}
A_{i,j} =
&\alpha \cdot \exp(-\beta |i-j|) + (1 - \alpha) \cdot \cos(\mathbf{e}_i^{\text{local}}, \mathbf{e}_j^{\text{local}})\\
&\text{if } |i-j| \leq \delta \text{ or } \cos(\mathbf{e}_i^{\text{local}}, \mathbf{e}_j^{\text{local}}) \geq \tau\\
&\text{and } 0 \text{ otherwise}
\end{aligned}
\end{equation}
}

where $\alpha, \beta, \delta, \tau$ are hyperparameters controlling the balance between temporal and semantic relationships.

We then apply a graph attention network (GAT) to obtain contextually enhanced embeddings:

\begin{equation}
\mathbf{e}_k^{(0)} = \mathbf{e}_k^{\text{local}}
\end{equation}

\begin{equation}
\mathbf{e}_k^{(l+1)} = \sigma\left(\sum_{j \in \mathcal{N}(k)} \gamma_{kj}^{(l)} \mathbf{W}^{(l)} \mathbf{e}_j^{(l)}\right)
\end{equation}

{
\footnotesize
\begin{equation}
    \text{LeakyReLU}(j) = \text{LeakyReLU}\left(\mathbf{a}^{(l)T} [\mathbf{W}^{(l)}\mathbf{e}_k^{(l)} \| \mathbf{W}^{(l)}\mathbf{e}_j^{(l)}]\right)
\end{equation}
\begin{equation}
\gamma_{kj}^{(l)} = \frac{\exp\left(\text{LeakyReLU}(j)\right)}{\sum_{m \in \mathcal{N}(k)} \exp\left(\text{LeakyReLU}(m)\right)}
\end{equation}
}

where $\mathcal{N}(k)$ is the neighborhood of vertex $k$ in graph $\mathcal{G}$, $\gamma_{kj}^{(l)}$ is the attention coefficient at layer $l$, $\mathbf{W}^{(l)}$ is a learnable weight matrix, $\mathbf{a}^{(l)}$ is a learnable attention vector, and $\|$ represents concatenation. The final embedding for segment $k$ is:

\begin{equation}
\mathbf{e}_k = \mathbf{e}_k^{(L)} + \lambda \mathbf{e}_k^{\text{local}}
\end{equation}

where $L$ is the number of GAT layers and $\lambda$ is a residual connection weight.

\subsubsection{Query Representation with Sparse Attention}

For the query $q = \{q_1, q_2, \ldots, q_m\}$, we employ a sparse attention mechanism that focuses on the most discriminative terms:

\begin{equation}
\mathbf{h}_i^q = \text{BiLSTM}(q_i, \mathbf{h}_{i-1}^q), \quad i \in \{1, 2, \ldots, m\}
\end{equation}

\begin{equation}
\tilde{\mathbf{a}}_i^q = \mathbf{v}_q^T \tanh(\mathbf{W}_q\mathbf{h}_i^q + \mathbf{b}_q)
\end{equation}

\begin{equation}
\mathbf{a}_i^q = \frac{\exp(\tilde{\mathbf{a}}_i^q) \cdot \mathbb{1}[\tilde{\mathbf{a}}_i^q > \theta_q]}{\sum_{j=1}^{m} \exp(\tilde{\mathbf{a}}_j^q) \cdot \mathbb{1}[\tilde{\mathbf{a}}_j^q > \theta_q]}
\end{equation}

\begin{equation}
\mathbf{e}_q = \sum_{i=1}^{m} \mathbf{a}_i^q \mathbf{h}_i^q
\end{equation}

where $\theta_q$ is a threshold parameter for sparse attention, focusing only on the most relevant terms in the query.

\subsubsection{Probabilistic Diversified Retrieval}

Instead of a simple similarity-based retrieval, we formulate the segment selection as a probabilistic optimization problem that balances relevance, diversity, and coverage:

\begin{equation}
P(S_k' | q) = \frac{\exp(\phi(S_k', q) / \rho)}{\sum_{j=1}^{K} \exp(\phi(S_j', q) / \rho)}
\end{equation}

where $\phi(S_k', q)$ is a relevance function and $\rho$ is a temperature parameter. We define the relevance function as:

\begin{equation}
\phi(S_k', q) = \cos(\mathbf{e}_k, \mathbf{e}_q) \cdot (1 + \eta \cdot \text{Novelty}(S_k', \mathcal{S}_q))
\end{equation}

where $\cos(\cdot, \cdot)$ is the cosine similarity, $\mathcal{S}_q$ is the set of already selected segments, and $\text{Novelty}(S_k', \mathcal{S}_q)$ measures the information gain of adding segment $S_k'$ to the already selected segments:

\begin{equation}
\text{Novelty}(S_k', \mathcal{S}_q) = 1 - \max_{S_j' \in \mathcal{S}_q} \cos(\mathbf{e}_k, \mathbf{e}_j)
\end{equation}

We select segments sequentially according to a determinantal point process (DPP) that enforces diversity:

\begin{equation}
\mathcal{S}_q = \arg\max_{\mathcal{S} \subset \mathcal{S}, |\mathcal{S}| = M} \det(\mathbf{L}_{\mathcal{S}})
\end{equation}

where $\mathbf{L}_{\mathcal{S}}$ is a positive semidefinite kernel matrix with elements:

\begin{equation}
L_{ij} = P(S_i' | q) \cdot P(S_j' | q) \cdot (1 - \omega \cos(\mathbf{e}_i, \mathbf{e}_j))
\end{equation}

where $\omega$ is a diversity parameter.

To further enhance temporal coherence, we introduce a Markovian constraint that encourages the selection of temporally adjacent segments when beneficial:
{
\footnotesize
\begin{equation}
\begin{aligned}
P(S_k' | S_{k-1}', q) &= (1 - \mu) P(S_k' | q) + \\
&\mu \cdot \mathbb{1}[|k - (k-1)| = 1] \cdot \cos(\mathbf{e}_k, \mathbf{e}_q)
\end{aligned}
\end{equation}
}

where $\mu$ is a parameter controlling the strength of the Markovian constraint.

The final retrieval objective integrates all these considerations into a unified optimization framework:

\begin{equation}
\begin{aligned}
\mathcal{S}_q^* = &\arg\max_{\mathcal{S} \subset \mathcal{S}, |\mathcal{S}| = M}  \sum_{S_k' \in \mathcal{S}} P(S_k' | q) + \\
&\nu\det(\mathbf{L}_{\mathcal{S}}) + \xi \sum_{i=1}^{|\mathcal{S}|-1} P(S_{i+1}' | S_i', q) \\
\text{s.t.} \quad & \text{KL}(P(\mathcal{S}) \| P(\mathcal{S} | q)) \leq \epsilon
\end{aligned}
\end{equation}


where $\nu$ and $\xi$ are weights, and the KL-divergence constraint prevents overfitting by limiting deviation from the prior. This retrieval formulation allows UMaT to select a contextually coherent, diverse, and relevant segment set, maximizing information value for query answering while respecting semantic and temporal video structure.

\section{Experiments}

\subsection{Experimental Setup}

\subsubsection{Dataset}

We evaluate UMaT on Video-MME \cite{fu2024video}, a comprehensive benchmark for long video question answering. Video-MME contains 900 videos spanning six domains including Knowledge, Film \& Television, etc. ranging from 11 seconds to 1 hour, making it an ideal testbed for evaluating long-form video understanding.

\subsubsection{Baseline Models}

We compare UMaT against three state-of-the-art video-language models:

\begin{itemize}
    \item \textbf{LLaVA-NeXT-Video} \cite{liu2024llavanext}: A vision-language model extended for video processing, with frame-level alignment.
    
    \item \textbf{LongVA} \cite{Zhang2024LongCT}: A specialized model for long context understanding in videos.
    
    \item \textbf{Long-LLaVA} \cite{LongLLaVA}: An extension of LLaVA optimized for processing lengthy visual inputs.
\end{itemize}

We integrate UMaT with each of these models to assess its ability to enhance existing architectures through unified textual representation and efficient retrieval.

\subsubsection{Implementation Details}



We use fine-tuned BLIP-2 (visual captioning) and Whisper-large-v3 (ASR). Optimized hyperparameters include segment size $T_s=30$s, redundancy minimizing parameters $\alpha=0.6, \beta=0.2$, hierarchical embedding with $L=2$ GAT layers, graph enhancement $\alpha=0.7, \beta=0.2, \delta=5, \tau=0.8$, sparse attention $\theta_q=0.6$, and diversification $\eta=0.5, \omega=0.8, \mu=0.4, \nu=0.6, \xi=0.3$. These values were empirically determined via ablation studies. Experiments were on 4 NVIDIA A100 GPUs, with processing times around 8 seconds per minute for short videos and roughly linear scaling for longer ones.

\begin{table}[t]
\caption{Performance on Video-MME benchmark across various video durations. The UMaT framework consistently improves performance across all models, with significant gains for long videos.}
\label{tab:main_results}
\centering
\resizebox{1\linewidth}{!}{%
\begin{tabular}{lccccc}
\toprule
\multicolumn{1}{c}{\multirow{2}{*}{\textbf{Model}}} & \multirow{2}{*}{\textbf{\#Params}} & \multicolumn{3}{c}{\textbf{Video Duration}} & \multirow{2}{*}{\textbf{Overall}} \\
\cmidrule{3-5}
\multicolumn{1}{c}{} & & \textbf{Short} & \textbf{Medium} & \textbf{Long} & \\
\midrule
LLaVA-NeXT-Video & 7B & 50.2 & 44.3 & 38.6 & 44.4 \\
LLaVA-NeXT-Video + UMaT & 7B & 60.2 & 51.2 & 49.2 & 53.5 \textcolor{green}{(+9.1)} \\
\midrule
LongVA & 7B & 59.2 & 50.3 & 45.0 & 51.5 \\
LongVA + UMaT & 7B & 68.5 & 63.1 & 58.3 & 63.3 \textcolor{green}{(+11.8)} \\
\midrule
Long-LLaVA & 7B & 59.5 & 52.0 & 45.3 & 52.2 \\
Long-LLaVA + UMaT & 7B & 70.3 & 65.1 & 62.2 & 65.9 \textcolor{green}{(+13.7)} \\
\bottomrule
\end{tabular}}
\end{table}

\subsection{Main Results}


    
    

Table \ref{tab:main_results} compares baseline and UMaT-enhanced model performance across video durations, revealing:
\begin{itemize}
    \item UMaT consistently improves accuracy (9.1\% to 13.7\% overall).
    \item Gains are highest for long videos (up to 16.9
    \item Substantial improvements are also seen for short videos (up to 10.8
\end{itemize}
These results validate our hypothesis that unified textual representation enhances multi-modal reasoning, especially for long, information-sparse content.

\begin{figure}[t]
\begin{tikzpicture}
\begin{axis}[
    width=\columnwidth,
    height=6cm,
    xlabel={Segment Size (seconds)},
    ylabel={QA Accuracy (\%)},
    xmin=0, xmax=80,
    ymin=50, ymax=70,
    xtick={10,20,30,40,50,60},
    ytick={50,55,60,65,70},
    legend pos=south east,
    grid=major,
    grid style={dashed,gray!30},
    legend style={font=\small}
]

\addplot[
    color=blue,
    mark=square,
    thick,
] coordinates {
    (10,57.8)(20,62.4)(30,65.9)(40,64.1)(50,61.8)(60,58.3)
};
\addlegendentry{LongLLaVA+UMaT}

\addplot[
    color=red,
    mark=triangle,
    thick,
] coordinates {
    (10,55.7)(20,59.8)(30,63.3)(40,61.9)(50,59.2)(60,56.1)
};
\addlegendentry{LongVA+UMaT}

\addplot[
    color=green!60!black,
    mark=o,
    thick,
] coordinates {
    (10,50.2)(20,51.8)(30,53.5)(40,52.7)(50,50.9)(60,48.4)
};
\addlegendentry{LLaVA-NeXT+UMaT}

\end{axis}
\end{tikzpicture}
\caption{Impact of segment size on QA accuracy. The optimal segment size appears to be around 30 seconds, balancing context preservation and redundancy minimization.}
\label{fig:segment_size}
\end{figure}

\subsection{Ablation Studies}

To better understand the contribution of each component in our framework, we conduct a series of ablation studies.

\subsubsection{Impact of Segment Size}


    
    
Figure \ref{fig:segment_size} illustrates the effect of segment size on QA accuracy for UMaT-enhanced models, showing an inverted U-shape with peak performance around 30 seconds. This indicates:
\begin{itemize}
    \item Short segments (10-20s) lack sufficient context.
    \item Long segments (50-60s) introduce noise and redundancy.
    \item The optimal 30-second size balances context and redundancy, consistent with information density optimization.
\end{itemize}

\subsubsection{Contribution of Individual Components}


    
    
    
    

Table \ref{tab:ablation} shows the impact of removing UMaT components on Long-LLaVA performance:
\begin{itemize}
    \item -8.2\% without ASR (critical for speech-heavy videos).
    \item -7.3\% without graph-based contextual enhancement (importance of global context).
    \item -5.8\% without probabilistic diversified retrieval (value of semantic diversity).
    \item -6.4\% with simple cosine similarity (importance of our retrieval formulation).
    \item -4.2\% without redundancy minimization (benefit of efficient representation).
\end{itemize}
These results confirm the substantial contribution of each UMaT component.

\begin{table}[t]
\caption{Ablation study on Long-LLaVA + UMaT, showing the importance of each component. Results are reported on the Video-MME benchmark.}
\label{tab:ablation}
\centering
\begin{tabular}{lc}
\toprule
\textbf{Configuration} & \textbf{Accuracy (\%)} \\
\midrule
Full UMaT framework & 65.9 \\
\midrule
No ASR & 57.7 \textcolor{red}{(-8.2)} \\
No graph-based enhancement & 58.6 \textcolor{red}{(-7.3)} \\
Simple cosine similarity & 59.5 \textcolor{red}{(-6.4)} \\
No diversified retrieval & 60.1 \textcolor{red}{(-5.8)} \\
No redundancy minimization & 61.7 \textcolor{red}{(-4.2)} \\
\bottomrule
\end{tabular}
\end{table}

\subsubsection{Analysis of Retrieval Effectiveness}


    
    
    

Table \ref{tab:retrieval} compares retrieval methods with Long-LLaVA + UMaT to evaluate our mechanism:
\begin{itemize}
    \item Probabilistic DPP (Ours): Full probabilistic retrieval with DPP.
    \item Greedy: Iterative selection by relevance.
    \item Temporal: Selection based on temporal proximity.
    \item Random: Random segment selection.
\end{itemize}
Our probabilistic DPP significantly outperforms other methods, highlighting the importance of relevance-diversity balance for effective QA.

\begin{table}[t]
\caption{Comparison of different retrieval methods on Long-LLaVA + UMaT.}
\label{tab:retrieval}
\centering
\begin{tabular}{lc}
\toprule
\textbf{Retrieval Method} & \textbf{Accuracy (\%)} \\
\midrule
Probabilistic DPP (ours) & 65.9 \\
Greedy selection & 61.3 \textcolor{red}{(-4.6)} \\
Temporal & 57.8 \textcolor{red}{(-8.1)} \\
Random & 50.4 \textcolor{red}{(-15.5)} \\
\bottomrule
\end{tabular}
\end{table}

\section{Theoretical Analysis}

We now provide a deeper theoretical analysis of UMaT's information-theoretic properties, focusing on the optimality of our retrieval approach.

\subsection{Information-Theoretic Bounds on Retrieval Optimality}





UMaT's effectiveness relies on relevant segment retrieval, formalized as submodular maximization. $I(S_q; r \mid q)$, the conditional mutual information between retrieved segments $S_q$ and response $r$ given query $q$, is submodular.

\begin{theorem}[Near-Optimal Retrieval]
Let $S_q^*$ be the optimal $K$ segments maximizing $I(S_q; r \mid q)$, and $S_q^G$ be the greedy retrieval result. Then:
\begin{equation*}
I(S_q^G; r \mid q) \geq (1 - 1/e) \cdot I(S_q^*; r \mid q)
\end{equation*}
\end{theorem}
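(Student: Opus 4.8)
The plan is to recognize this as the classical guarantee for greedy maximization of a monotone submodular set function under a cardinality constraint, in the spirit of Nemhauser--Wolsey--Fisher. Writing $f(S) = I(S; r \mid q)$ for brevity, I would first pin down the three structural properties on which the $(1-1/e)$ bound rests: normalization $f(\emptyset) = 0$, monotonicity, and submodularity (the last of which is asserted immediately before the theorem). Normalization is immediate, since conditioning on the empty set yields zero mutual information. Monotonicity follows from the chain rule together with nonnegativity of conditional mutual information: for any segment $s \notin S$,
\begin{equation*}
f(S \cup \{s\}) - f(S) = I(s; r \mid q, S) \geq 0,
\end{equation*}
so enlarging the retrieved set never decreases the information carried about $r$.

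The heart of the argument is a single inequality relating the optimum to the greedy trajectory. Let $S^G_i$ denote the greedy set after $i$ selections (so $S^G_0 = \emptyset$ and $S^G_K = S_q^G$), and write $S^* = S_q^*$. Applying monotonicity to pass to $S^* \cup S^G_i$ and then peeling off the elements of $S^* \setminus S^G_i$ one at a time, submodularity bounds each incremental gain by the gain of adding that element directly to $S^G_i$:
\begin{equation*}
f(S^*) \leq f(S^G_i) + \sum_{o \in S^* \setminus S^G_i} \bigl[f(S^G_i \cup \{o\}) - f(S^G_i)\bigr].
\end{equation*}
Because the greedy rule selects the element of maximal marginal gain, every summand is at most $f(S^G_{i+1}) - f(S^G_i)$, and there are at most $K$ of them, yielding $f(S^*) \leq f(S^G_i) + K\bigl[f(S^G_{i+1}) - f(S^G_i)\bigr]$.

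From here the result follows by a routine recursion. Setting $\delta_i = f(S^*) - f(S^G_i)$, the previous display rearranges to $\delta_{i+1} \leq (1 - 1/K)\delta_i$, so by induction $\delta_K \leq (1-1/K)^K \delta_0 = (1-1/K)^K f(S^*)$. Combining the elementary estimate $(1-1/K)^K \leq e^{-1}$ with $f(S^G_K) = f(S^*) - \delta_K$ gives the claim $I(S_q^G; r \mid q) \geq (1 - 1/e)\, I(S_q^*; r \mid q)$. I expect the genuine difficulty to lie not in this combinatorial recursion, which is entirely standard, but in justifying the submodularity of $I(S_q; r \mid q)$: mutual information is \emph{not} submodular as a set function in general, and the diminishing-returns property requires additional structure (for instance, conditional independence of the per-segment contributions given $r$ and $q$). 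A fully rigorous version of the proof would therefore need to state and defend the independence assumptions under which the asserted submodularity actually holds.
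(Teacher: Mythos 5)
Your proposal is correct and follows essentially the same route as the paper: both reduce the claim to the classical Nemhauser--Wolsey--Fisher guarantee for greedy maximization of a monotone submodular function under a cardinality constraint, the only difference being that the paper cites this result in a single line while you reprove it in full via the standard recursion on $\delta_i = f(S^*) - f(S^G_i)$ and the estimate $(1-1/K)^K \leq e^{-1}$. Your closing caveat is well placed: the paper likewise merely asserts that $I(S_q; r \mid q)$ is submodular without argument, and since conditional mutual information is not submodular as a set function in general, that unproven structural assumption is the genuinely load-bearing step in both versions of the proof.
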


\begin{proof}
Greedy maximization of a submodular function yields a $(1 - 1/e)$ approximation. Since conditional mutual information is submodular, our greedy retrieval achieves at least $(1 - 1/e) \approx 63\%$ of the optimal value.
\end{proof}

This ensures near-optimal information retrieval, even for complex queries.

\subsection{Deduplication Efficiency Analysis}




Our redundancy minimization aims for content reduction while preserving critical information, which we analyze by compression ratio and information retention.
\begin{theorem}
Let $C$ and $C'$ be original and deduplicated content, with compression ratio $\kappa = |C'|/|C|$ and information retention ratio $\iota = I(C'; r \mid q) / I(C; r \mid q)$. Then:
\begin{equation*}
\kappa + \iota \geq 1 + \frac{\min_i H_{\text{unique}}(c_i)/H(c_i)}{1 - \min_i H_{\text{unique}}(c_i)/H(c_i)}
\end{equation*}
where $H_{\text{unique}}(c_i)/H(c_i)$ is the unique information fraction.
\end{theorem}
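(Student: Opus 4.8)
The plan is to first simplify the target inequality and then bound the two ratios $\kappa$ and $\iota$ separately before combining them. Writing $\mu := \min_i H_{\text{unique}}(c_i)/H(c_i)$ for the minimum unique-information fraction, the right-hand side collapses to a single reciprocal:
\[
1 + \frac{\mu}{1-\mu} = \frac{1}{1-\mu},
\]
so the claim is equivalently $\kappa + \iota \geq (1-\mu)^{-1}$. This reformulation makes the role of $\mu$ transparent: as the least compressible caption becomes more redundant ($\mu \to 0$) the bound degrades to the trivial $\kappa+\iota \geq 1$, while a highly unique corpus ($\mu \to 1$) forces the sum upward.

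Next I would establish the bridge between the combinatorial deduplication rule and the information-theoretic quantities. For each caption I would reuse the decomposition $H(c_i) = H_{\text{unique}}(c_i) + I(c_i; c_{i-1})$ from the proof of the redundancy-preservation theorem, together with the modelling assumption implicit in the LCS-based rule that textual length is proportional to entropy, so that excising $\mathrm{LCS}(c_{i-1}, c_i)$ removes exactly the shared term $I(c_i;c_{i-1})$ in both length and information. Under this assumption each retained caption keeps a length fraction $f_i := H_{\text{unique}}(c_i)/H(c_i) \geq \mu$, so the aggregate compression ratio $\kappa = \sum_i |c_i'| / \sum_i |c_i|$ is a length-weighted mean of the $f_i$ and therefore satisfies $\kappa \geq \mu$. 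For the retention ratio I would invoke the earlier deduplication theorem, which preserves at least $(1-\epsilon)$ of each $H_{\text{unique}}(c_i)$; since the removed content is precisely the redundant mutual information, the query-relevant information surviving deduplication is bounded below by the unique fraction, giving the companion bound $\iota \geq \mu$ up to the $\epsilon$ slack.

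The final step is to combine $\kappa \geq \mu$ and $\iota \geq \mu$ into the reciprocal form. Here a naive addition only yields $\kappa + \iota \geq 2\mu$, which is weaker than $(1-\mu)^{-1}$, so the sharper bound must come from the \emph{coupling} between compression and retention: the content discarded to shrink $\kappa$ is exactly the content whose absence lowers $\iota$. I would model this coupling as a geometric accumulation of residual redundancy along the consecutive-caption chain, each deduplication pass leaving a $(1-\mu)$ share of the previous residual, whose partial sums telescope into $\sum_{k\ge 0}(1-\mu)^k$-type expressions and produce the factor $(1-\mu)^{-1}$; a convexity (tangent-line) argument at the worst-case caption $f_i=\mu$ would then certify that the minimum of $\kappa+\iota$ over admissible configurations is attained there.

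The hard part will be making this coupling rigorous rather than heuristic. Two gaps need care. First, the length-to-entropy proportionality is only approximately true for natural-language captions, so the result is really a statement about an idealized density-uniform regime, and I would need either to assume this explicitly or to replace it with a two-sided density bound that propagates into an error term. Second, and more seriously, the separate bounds $\kappa\ge\mu$ and $\iota\ge\mu$ do \emph{not} by themselves imply $\kappa+\iota\ge(1-\mu)^{-1}$: the corner of maximal compression with full retention gives $\kappa+\iota = 1+\mu < (1-\mu)^{-1}$ for every $\mu>0$, so the entire force of the theorem rests on showing that this corner is infeasible under the deduplication dynamics. Establishing quantitatively that driving $\kappa$ toward $\mu$ necessarily drags $\iota$ below $1$ in a matched way, via a careful accounting of how much query-relevant information each removed LCS block actually carries, is the crux of the argument.
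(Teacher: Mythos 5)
There is a genuine gap here, and to your credit you name it yourself: the proposal never actually derives the $(1-\mu)^{-1}$ lower bound. The first two stages --- rewriting the right-hand side as $1+\mu/(1-\mu)=(1-\mu)^{-1}$, and establishing $\kappa\geq\mu$ and $\iota\geq\mu$ under a length-proportional-to-entropy assumption --- are fine as far as they go, but as you observe they only give $\kappa+\iota\geq 2\mu$, which is strictly weaker than the claim for all $\mu\in(0,1)$. The third stage, where the theorem would actually have to be proved, is pure heuristic: the ``geometric accumulation of residual redundancy'' with telescoping partial sums $\sum_{k\geq 0}(1-\mu)^k$ is never tied to any quantity defined in the deduplication procedure (which removes a single LCS per consecutive pair, not an iterated cascade), and the convexity/tangent-line step is asserted without identifying the function being minimized or the feasible set over which the minimum is taken. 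A proof attempt that ends with ``the crux of the argument'' still to be established is not a proof.

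Your most valuable observation is the one you bury in the last paragraph: the configuration $\kappa=\mu$, $\iota=1$ (maximal compression with full retention of query-relevant information) gives $\kappa+\iota=1+\mu<(1-\mu)^{-1}$ for every $\mu>0$, so the stated inequality fails at precisely the point the deduplication scheme is designed to reach. This is not merely a gap in your argument --- it is evidence that the theorem cannot follow from the bounds $\kappa\geq\mu$, $\iota\leq 1$ alone, and that either additional hypotheses are needed to exclude that corner or the bound itself is misstated. For what it is worth, the paper's own proof is a two-sentence sketch (``adaptive thresholding \dots establishes a trade-off \dots leading to the bound'') that supplies none of the missing machinery either, so there is no hidden argument you failed to reproduce; but the task was to prove the statement, and neither the sketch nor your proposal does so. To close the gap you would need to exhibit an explicit inequality coupling the length removed from $c_i$ to the loss in $I(C';r\mid q)$ --- for instance a per-caption identity of the form $(1-\kappa_i)+\iota_i\geq$ (something depending on $f_i$) that sums or averages to the claimed bound --- and verify that it is consistent with the ideal corner or show that corner is unreachable.
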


\begin{proof}
(Sketch) Adaptive thresholding removes content based on redundancy relative to information content, establishing a trade-off between removal and preservation, leading to the bound.
\end{proof}

\section{Conclusion}

In conclusion, UMaT, a unified framework transforming visual and auditory inputs into structured text for large language models, confirms language as a universal interface for effective multi-modal reasoning. Its modularity, interpretability, and efficiency suit real-world, long-form multi-modal applications. Future work includes extension to more modalities (e.g., tactile, thermal) and domains (robotics, healthcare, science), as well as exploring theoretical links to information bottleneck theory and balancing information preservation with efficiency.

\section{Limitations}
The UMaT framework for LVQA, while effective, is limited by its reliance on the performance of upstream VLMs (BLIP-2) and ASR models (Whisper-large-v3), which can introduce their own inaccuracies and biases. Furthermore, the conversion of rich visual and auditory data into a unified textual representation creates a potential risk of losing subtle details despite efforts for optimal information preservation. Although UMaT aims for efficiency, particularly with its sparse information retrieval, the multi-stage processing involving VLMs, ASR, hierarchical embeddings (BiLSTM, GAT), and probabilistic retrieval mechanisms (DPP) can still be computationally intensive, especially for extremely long videos or in resource-constrained environments, which renders it less feasible for real-time tasks. However, we believe with the scaling of computational power, it will be less of an issue in the future.

\begin{small}
\bibliography{custom}
\end{small}

\appendix

\end{document}